\newcommand{\I}{\mathcal{I}}
\newcommand{\F}{{\mathcal{F}}}
\newcommand{\newtext}[1]{#1}
\newcommand{\net}{{\mathcal{N}}}
\newcommand{\R}{{\mathbb{R}}}
\newcommand{\pare}[1]{\left(#1\right)}
\newcommand{\sqbrac}[1]{{\left[{#1}\right]}}
\newcommand{\bmat}[1]{\begin{bmatrix}#1\end{bmatrix}}
\DeclareMathOperator*{\E}{\mathbb{E}}
\newtheorem{theorem}{Theorem}[section]
\newtheorem{lemma}[theorem]{Lemma}
\theoremstyle{definition}
\theoremstyle{remark}
\definecolor{cvprblue}{rgb}{0.21,0.49,0.74}
\title{CodedEvents: Optimal Point-Spread-Function Engineering for\\ 3D-Tracking with Event Cameras}
\author{
Sachin Shah \and Matthew A. Chan \and Haoming Cai \and Jingxi Chen \and Sakshum Kulshrestha \and Chahat Deep Singh \and Yiannis Aloimonos \and Christopher A. Metzler \\
\and
{\small University of Maryland, College Park}\\
\vspace{-15pt}
{\tt\small shah2022@umd.edu}
}
\begin{document}

\twocolumn[{%
\renewcommand\twocolumn[1][]{#1}%
\maketitle
\begin{center}
    \centering
    \captionsetup{type=figure}
    \includegraphics[width=0.9\textwidth]{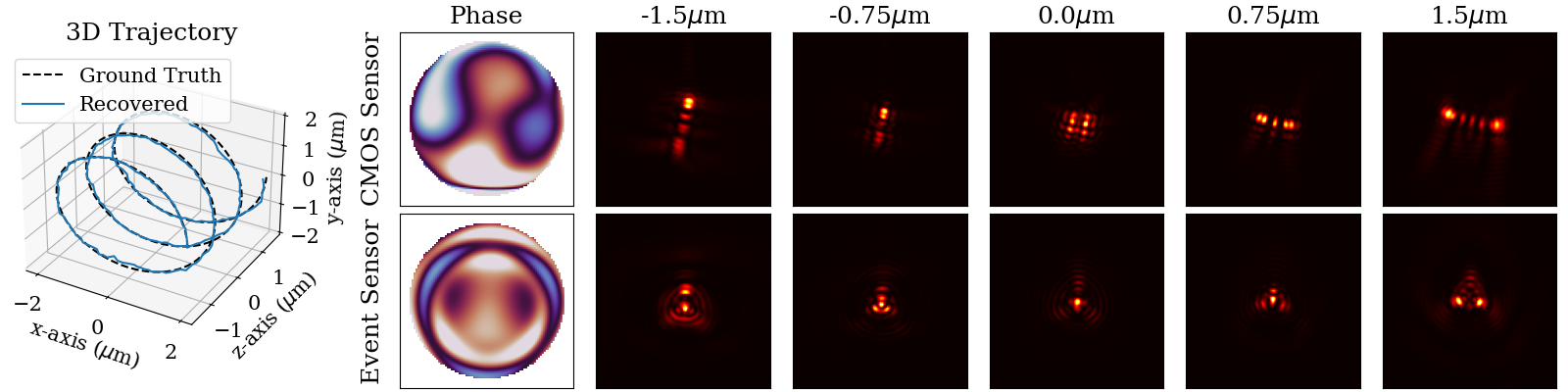}
    \captionof{figure}{ \textbf{CodedEvent Tracking.} Left: example recovered trajectory using designed optics for an event camera. Right: top row, optimal phase mask design and PSFs for a CMOS sensor, bottom row, our optimal phase mask design and PSFs for an event sensor. }
\end{center}%
}]

\begin{abstract}
Point-spread-function (PSF) engineering is a well-established computational imaging technique that uses phase masks and other optical elements to embed extra information (e.g.,~depth) into the images captured by conventional CMOS image sensors.
To date, however, PSF-engineering has not been applied to neuromorphic event cameras; a powerful new image sensing technology that responds to changes in the log-intensity of light. 

This paper establishes theoretical limits (Cram\'er Rao bounds) on 3D point localization and tracking with PSF-engineered event cameras.  
Using these bounds, we first demonstrate that existing Fisher phase masks are already near-optimal for localizing static flashing point sources (e.g., blinking fluorescent molecules). 
We then demonstrate that existing designs are sub-optimal for tracking moving point sources and proceed to use our theory to design optimal phase masks and binary amplitude masks for this task. 
To overcome the non-convexity of the design problem, we leverage novel implicit neural representation based parameterizations of the phase and amplitude masks. 
We demonstrate the efficacy of our designs through extensive simulations. %
We also validate our method with a simple prototype.

\end{abstract}

\section{Introduction}
\label{sec:intro}

Single-molecule localization microscopy (SMLM) is a vital tool for resolving nano-scale structures with applications in analysis of protein clusters~\cite{Maynard:2021}, cell dynamics~\cite{Verdier:2022}, and electromagnetic effects~\cite{Kinkhabwala:2009}. Traditional SMLM experiments are limited by the slow capturing process of frame-based CMOS sensors, preventing use in capturing high-speed, dynamic interactions. Recently, \cite{Cabriel:2023:SMLM} showed event cameras are key to enabling high-speed 2D SMLM.

In contrast to traditional CMOS cameras, event cameras are an emerging class of bio-inspired neuromorphic sensors that operate with a high temporal resolution on the order of $\mu$s. These sensors are comprised of an asynchronous pixel array, where each pixel records an event when the log intensity change exceeds a set threshold. In addition to having kilohertz time resolution, these sensors are low-power, resistant to constant background noise, and can operate over a high dynamic range~\cite{Gallego:2020}. Already, these sensors have proven useful in a range of applications including object tracking~\cite{Angelopoulos:2021:eyetracking, Tinch:2022:stars}, gesture recognition~\cite{Lee:2014:gesture, Amir:2017:gesture}, and robotics~\cite{Iaboni:2021:Robots, Gomez:2022:Robots}. 

Just as PSF-engineering allows one to extract additional information using conventional CMOS sensors~\cite{Shechtman:2014}, 
we believe that event-camera-specific PSF engineering will be the key to enabling high-speed 3D SMLM with event cameras. 
Unfortunately, existing PSF design theory is not equipped for the event space. 
In this work, we bridge this gap by developing Cram\'er Rao Bounds on 3D position estimation for event camera measurements. Leveraging these bounds, we subsequently develop a novel implicit neural representation for optical elements to design components with improved 3D particle localization capabilities.

Specifically, our principal contributions are as follows:
\begin{itemize}
    \item We derive the Fisher Information and Cram\'er Rao Bounds for event camera measurements parameterized by 3D spatial positions.
    \item We develop novel implicit neural representations for learning both amplitude and phase masks.
    \item We identify new phase and amplitude designs for optimally encoding 3D information with event cameras.
    \item We demonstrate in simulation that our designs outperform existing methods at 3D particle tracking.%
\end{itemize}

\section{Related Work}
\label{sec:related}

\subsection{Coded Optics}
Specialized lenses have been shown to encode additional depth information in CMOS image frames. A `coded aperture' can produce depth-dependent blurs that enable one to extract depth by looking at the per-pixel defocus pattern~\cite{levin2007image}. Future works extend the `depth from defocus' idea by leveraging information theory to design an optimal lens~\cite{Shechtman:2014, Jusuf:2022:OPSF}. More recently, researchers have proposed optimizing optical parameters in conjunction with a neural network reconstruction algorithm in an `end-to-end' fashion. This joint-optimization problem is difficult to optimize due to local minima. Many works have discussed mask parameterizations to stabilize optimization: Zernike basis~\cite{Wu:2019:PhaseCam, Chang:2019:DeepOptics3D} and rotationally symmetric~\cite{Ikoma:2021:DepthFromDefocus}. However, direct pixel-wise methods should be preferred due to their expressiveness~\cite{Liu:2022}. Dynamic pixel-wise masks have been proposed as a training stabilization mechanism~\cite{Shah:2023:TiDy}. Specialized optics have been explored for other applications such as super resolution~\cite{Sitzmann:2018:EDOF}, high-dynamic-range imaging~\cite{Metzler:2020:HDR}, hyper-spectral sensing~\cite{Li:2022:HS}, and privacy-preservation~\cite{Hinojosa:2021:privacy}. To our knowledge, PSF engineering specifically for event-based sensors has been relatively unexplored.

\subsection{Microscopy Tracking}
Originally, single-particle localization was limited to 2D dimensions, where only the $x,y$ coordinates of an emitter are recovered~\cite{Small:2014:SPL}. Similar to works on depth from defocus, the depth of an emitter can be recovered from 2D measurements by considering a microscope's PSF. A standard microscope typically has a PSF resembling the circular Airy pattern; however, because it spreads out quickly its depth resolving range is limited. A few engineered PSFs---such as the double-helix PSF \cite{Pavani:2009:DHPSF}---have since been proposed  to improve the imaging range. In particular, Shechtman \etal finds the optimally informative PSF (dubbed the Fisher PSF) for a CMOS sensor to localize the 3D position of a single emitter~\cite{Shechtman:2014}. A few other techniques for resolving the 3D location of particles have been proposed such as light-field-microscopy~\cite{Llavador:2016:LFM} and lensless imaging~\cite{LindaLiu:2020:Lensless}.

Unfortunately, these techniques are limited by the sub-kilohertz readout of conventional CMOS sensors. This hinders their use in imaging fast, dynamic processes such as blood flow \cite{Bouchard:2009:BloodFlow} and voltage signals~\cite{Abdelfattah:2023:VoltageSignal}. A few ultrafast imaging methods have also been proposed \cite{Gao:2014:ultrafast, Wu:2020:twophoton, Ma:2021:Flourescence, Xiao:2023:twophoton} but require high-power illumination which can be phototoxic to certain organic samples. Recently, event cameras have been proposed as an alternative to CMOS sensors for 2D SMLM \cite{Cabriel:2023:SMLM}. Another work proposes extending light-field-microscopy to event cameras to resolve 3D position but requires complex optical setups and sacrifices spatial resolution \cite{Guo:2023:EventLFM}. By designing optics to encode depth information into event streams, we can enable high-speed 3D SMLM.

\subsection{Depth Estimation}

Extracting 2D information from images tends to be a significantly easier task than extracting depth, hence, monocular depth estimation is often the bottleneck in 3D tracking performance. 
Structured light projectors~\cite{Geng:2011:StructuredLight} or time-of-flight sensors~\cite{Foix:2011:tof} use active illumination to extract depth information. Given these methods' reliance on an internal light source, performance can degrade in adverse lighting conditions.
If we allow multiple views, stereo~\cite{Hartley:2004:MV} or structure from motion~\cite{Tomasi:1992} can triangulate 3D position. These methods are sensitive to occlusion and texture-less scenes and require multiple calibrated cameras.
Many neural network approaches with all-in-focus CMOS images as input have been proposed~\cite{Spencer:2020:Depth, You:2021:Depth, Ranftl:2022:MiDaS, Yin:2023:Metric3D}. Recently, event-based depth estimation has made significant progress with neural networks~\cite{Zhu:2019:EventDepth, Hidalgo:2020:EventDepth,Mostafavi:2021:EventRGBStereo, Nam:2022:EventStereo, Shi:2023:EVEN}. Spiking neural networks have been proposed for spiking cameras, which similar to event cameras, offer asynchronous readout of pixels~\cite{Zhang:2022:Spike}.

\section{Theory}\label{sec:theory}

\subsection{Event Camera Simulation}
\newtext{Let $(x(t), y(t), z(t))$ be the location of a point light source at time $t$. We focus on tracking points around some focal plane $z$, with $z(t)=z+\Delta z(t)$ and $z\gg|z(t)|$. In this context, a pin-hole camera would capture,
\begin{align}
    I_t(u, v) &= \delta\pare{ u - f\frac{x(t)}{z+\Delta z(t)}, v -f\frac{ y(t)}{z+\Delta z(t)} }\\
    &\approx \delta\pare{ u - \frac{f}{z}x(t), v -\frac{f}{z}y(t) }
\end{align}
where $\delta$ is the Dirac Delta function. Because $f$ and $z$ are constant, we will consider $x(t)$ and $y(t)$ pre-scaled for notation sake. }
In practice, a camera captures a blurry image depending on the point-spread-function (PSF) it induces.
A PSF $h$ can be modeled with Fourier optics theory as a function of 3D-position $x,y,z$, amplitude modulation $A$ caused by blocking light, and phase modulation $\phi^M$ caused by phase mask height variation~\cite{Goodman:2017}. 
\begin{align}
    h=\left|\F\left[ A \exp\left( i \phi^{DF}(x,y,z) + i \phi^{M}  \right) \right]\right|^2
\end{align}
where $\phi^{DF}(x,y,z)$ is the defocus aberration due to the distance from the camera.
Then, a point light source at location $(x(t), y(t), z(t))$ captured by a regular camera is
\begin{align}
    I^b_t(u, v) &= [h_{z(t)} * I_t](u, v)\\
                &= h(x(t), y(t) ; z(t)).
\end{align}
Note that because this PSF depends on depth, it can be used to encode depth information into $I^b$. Event cameras trigger events with respect to the log of photocurrent $L = \log(I^b)$~\cite{Gallego:2020} \newtext{where a pixel's photocurrent is linearly related to the wave intensity at that  pixel}. Specifically, an event is triggered when the absolute difference between the current intensity at $t+\tau$ and the reference intensity from $t$, $\Delta L(u, v) = L_{t+\tau}(u, v) - L_{t}(u, v)$, is greater than some threshold $T$.
\begin{equation}\label{eq:Otpiece}
    O_t(u, v) = \begin{cases} 
      +1 & \Delta L(u, v) > T \\
      -1 & \Delta L(u, v) < -T \\
      \textnormal{none} & \textnormal{otherwise} 
   \end{cases}
\end{equation}
In isolation, each event contains little information; however, a sequence of events can be highly informative~\cite{Lagorce:2017:HOTS, Sironi:2018:HATS, Alzugaray:2018:FSAE}. 
Notably the inceptive event time-surfaces representation suggests the trailing events that occur after the first event correspond to the log-intensity change~\cite{Baldwin:2019:LogDiff}. Therefore, by binning events over time, one can approximately recover the change in log intensity $\Delta L$. Visuallly, we show the accumulated event frame approaches $\Delta L$ as the number of intermediate frames accumulated increases in~\autoref{fig:log-diff-sweep}. \newtext{We prove this approximation is at most off by $1$ for an idealized event camera in Section~S4 of the supplement.} Therefore, our event measurement \eqref{eq:Otpiece} can be simplified as,
\begin{align}
    O_t &= \log\pare{I_t^b} - \log\pare{I^b_{t-\tau}}. \label{eq:optimistic-evt}
\end{align}

\begin{figure*}
    \centering
    \includegraphics[width=0.95\linewidth]{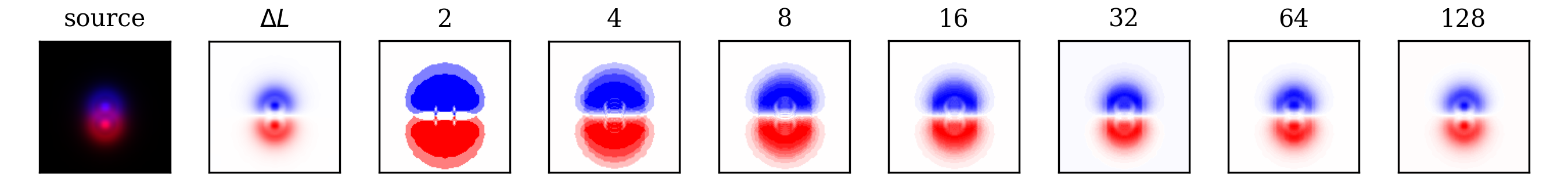}
    \caption{\textbf{ Binning events approximates the log difference as the number of accumulated frames increases. } Consider a point source moving from the blue location to the red location at depth plane $1\mu$m over a fixed time interval in the first image. The second image illustrates the direct access to the difference in \eqref{eq:optimistic-evt}, while the subsequent images demonstrate the effect of accumulating $N$ event frames across the time interval. Observe how large $N$ nearly recovers $\Delta L$, demonstrating the validity of the approximation.}
    \label{fig:log-diff-sweep}
    \vspace{-10pt}
\end{figure*}

\subsection{Information}
In the field of statistical information theory, the Fisher Information (FI) reports the amount of information gained about the parameters of a distribution, given a measurement. As such, we can use FI to express the effectiveness of PSFs at encoding depth information. The multi-parameter FI is represented as an $N\times N$ matrix where the $i,j$ entry is defined as the variance of the score:
\begin{align}
    \I(\theta)_{i,j} = \E\sqbrac{ \pare{ \frac{\partial}{\partial\theta_i} \log{f(X ; \theta)} }\pare{ \frac{\partial}{\partial\theta_j} \log{f(X ; \theta)} } \mid \theta }
\end{align}
where $\theta$ is the set of parameters, $\theta_i$ is the $i$th parameter, and $f(X ; \theta)$ is a probability density function for the distribution observation $X$ is drawn from. 

For traditional CMOS sensors, FI has been used to compare coded apertures and phase masks for a wide range of tasks such as depth estimation~\cite{Ober:2004}, hyper-spectral imaging~\cite{Baek:2021:HS}, \newtext{and detecting linear structures~\cite{Ghanekar:2022:ps2f}}.
Those works have shown that the intrinsic photon shot noise in $I^b$ can be modeled as a Poisson random variable with mean $\lambda=h(x,y,z)$. We derive the FI matrix for an event sensor.

\noindent\textbf{Flashing light.} As a warm-up, consider the SMLM technique for event cameras presented in \cite{Cabriel:2023:SMLM}, which assumes a blinking labeling model similar to STORM (stochastic optical reconstruction microscopy) \cite{Rust:2006:STORM}, PALM (photoactivated localization microscopy) \cite{Betzig:2006:PALM} and DNA-PAINT (DNA point accumulation for imaging in nano-scale topography) \cite{Sharonov:2006:DNA-PAINT}. With this idealized model of an event camera, $\log I^b_{t-\tau} = 0$, so \eqref{eq:optimistic-evt} reduces to
\begin{align}
    O_t = \log I^b_t.
\end{align}
By applying $e^x$ to the measurement, we can indirectly measure $I^b_t$. Moreover, by applying standard results for FI of a Poisson distribution \cite{Snyder:1991, Kay:1993}, we can write the FI matrix for an event camera capturing a blinking particle as:
\begin{equation}
    \I(\theta)_{i,j} = \sum_{n}^N \frac{1}{h(n) + \beta} \left( \frac{\partial h(n) }{\partial\theta_i}
  \right) \left( \frac{\partial h_z(n) }{\partial\theta_j}
  \right) \label{eq:fisher-general}
\end{equation}
where $N$ is the number of pixels, $h(n)$ is the PSF intensity at pixel $n$, $\beta$ is background noise, and $\theta = \{x, y, z\}$ corresponds to the 3D location of a point source. Notice that this is the same result as in \cite{Ober:2004}, suggesting that --- in the context of blinking particles --- the Fisher mask found in~\cite{Shechtman:2014} for a traditional CMOS camera is also optimal for an event-based sensor. 

\noindent\textbf{Generalization.} We now derive the positional information content for any event measurement. Rewriting \eqref{eq:optimistic-evt} with logarithmic rules, we obtain,
\begin{align}
    O_t = \log{ \frac{ I^b_t }{ I^b_{t-\tau} } }.
\end{align}
The inner expression is drawn from the ratio of Poisson random variables with means $\lambda_{t}$ and $\lambda_{t-\tau}$. This can be approximated as a single Normal distribution~\cite{Griffin:1992}:
\begin{align}
    \frac{I^b_{t}}{I^b_{t-\tau}}  \sim \net\pare{
        \frac{\lambda_t}{\lambda_{t-\tau}},
        \frac{\lambda_t}{\lambda_{t-\tau}^2} + \frac{\lambda_t^2}{\lambda_{t-\tau}^3}
    }.
\end{align}
Similar to the flashing light example, we can exponentiate the measurement to recover this ratio.
Using the symbolic mathematics solver SymPy \cite{SymPy}, we evaluate the expectation in~\eqref{eq:fisher-general} with 
$\theta=\{ x_t, y_t, z_t,  x_{t-\tau}, y_{t-\tau}, z_{t-\tau} \}$
and $f(X;\theta)$ as the PDF of the normal distribution, yielding
\begin{align}
    \I(\theta) = \sum_n^N \frac{\mathcal{D}^T\mathcal{D}}{2\pare{\mu+\nu}^2 }  \odot
    \bmat{  a&a&a&b&b&b\\
            a&a&a&b&b&b\\ 
            a&a&a&b&b&b\\ 
            b&b&b&c&c&c\\ 
            b&b&b&c&c&c\\ 
            b&b&b&c&c&c
    }
\end{align}
where
\begin{align}
    \mu &= \lambda_{t-\tau} = h(x_{t-\tau},y_{t-\tau},z_{t-\tau})+\beta \\
    \nu &= \lambda_{t}      = h(x_{t},y_{t},z_{t}) +\beta\\
    \mu_i &= \frac{\partial}{\partial\theta_i} \mu \\
    \nu_i &= \frac{\partial}{\partial\theta_i} \nu \\
    \mathcal{D} &= \bmat{ \mu_x/\mu & \mu_y/\mu & \mu_z/\mu & \nu_x/\nu & \nu_y/\nu & \nu_z/\nu } \\
    a &= 2 \mu^{2} \nu + 4 \mu^{2} + 2 \mu \nu^{2} + 12 \mu \nu + 9 \nu^{2} \\
    b &=  -\pare{2\mu^{2} \nu + 2 \mu^{2} + 2 \mu \nu^{2} + 7 \mu \nu + 6 \nu^{2}} \\
    c &= 2 \mu^{2} \nu + \mu^{2} + 2 \mu \nu^{2} + 4 \mu \nu + 4 \nu^{2} 
\end{align}

\section{Method}

\begin{figure*}[t]
\centering
\subfloat[Optical component optimization.]{
  \includegraphics[clip,width=0.85\linewidth]{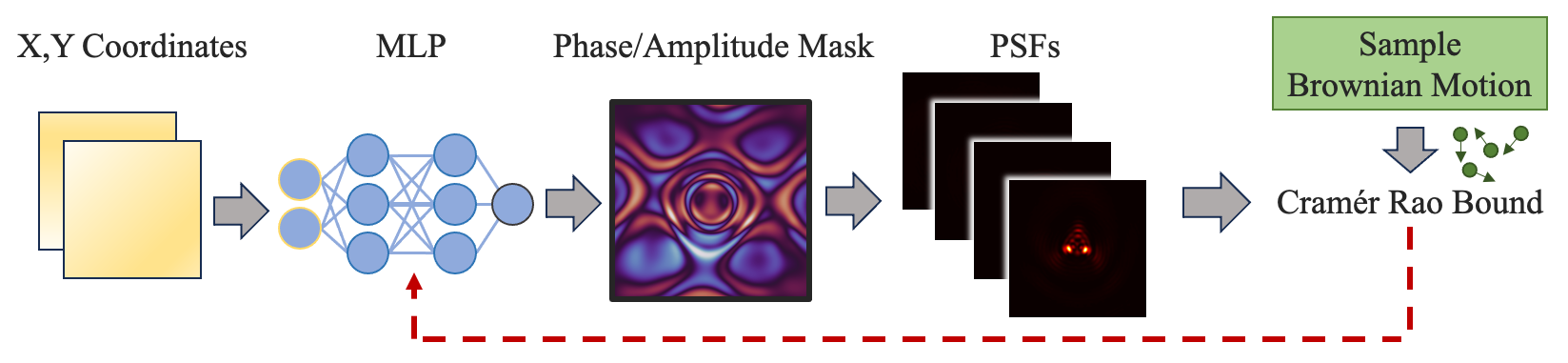}%
}

\subfloat[Coded event 3D tracking]{
  \includegraphics[clip,width=0.9\linewidth]{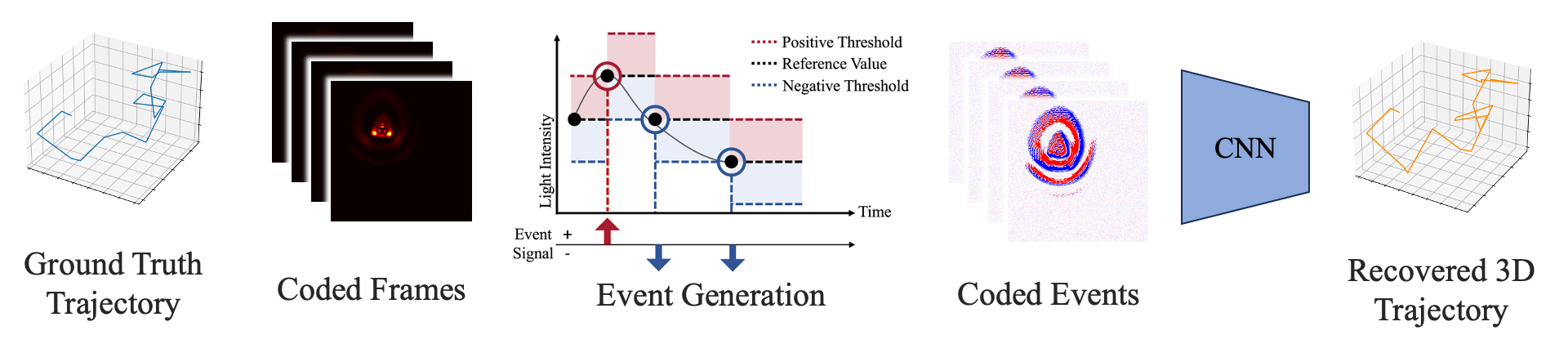}
}
\caption{\textbf{System overview.} (a) An MLP produces a phase or amplitude mask based on a grid of $x,y$ coordinates. The weights are updated through back-propagation of the CRB computed with Brownian Motion. (b) In simulation, coded events are generated by first rendering high-frame-rate coded CMOS frames and converting them to event frames. These measurements are passed to a 3D-tracking algorithm.}
\end{figure*}

\subsection{Objective Function}
Similar to existing work on 3D tracking for CMOS sensors, we can leverage the FI matrix to optimize optical parameters that efficiently encode depth information~\cite{Shechtman:2014, Wu:2019:PhaseCam}. Specifically, we compute the Cram\'er Rao Bound (CRB), which provides a fundamental bound on how accurately parameters can be estimated given a measurement. If $T(X)$ is the unbiased estimator for parameters $\theta$, then the CRB is
\begin{align}
    CRB_i \equiv \sqbrac{\I(\theta)^{-1}}_i \leq \text{cov}_\theta\pare{ T(X) }_i .
\end{align}
Then, the objective function we wish to minimize is
\begin{align}
    \mathcal{L}_{CRB} = \sum_{z\in Z} \sum_{i\in\theta} \sqrt{ \sqbrac{\I(\theta)^{-1}}_{i,i} }
\end{align}
where $Z$ is a set of depth planes.

\subsection{Optical Parameter Representation}
PSF manipulation is typically achieved through designed optical elements such as phase and amplitude masks. In general, phase masks are preferred over binary amplitude masks for their photon efficiency and continuous parametric representation, allowing for optimization via standard gradient descent methods. 
Inspired by~\cite{NeuWS}, we demonstrate that implicit neural representations can model phase masks in such a way that results in more stable optimization and better-optimized mask designs. We use an architecture similar to the sinusoidal representation network (SIREN) presented in \cite{Sitzmann:2019:SIREN} to predict the phase delay caused by the mask at each location $(u, v)$. Input data in $\R^2$ is processed by a four-layer multi-layer perceptron (MLP) with hidden feature size $128$, and $\sin$ activation. We refer to this method as \textit{Neural Phase Mask (NPM)}.

Phase masks offer many degrees of freedom and excellent light throughput, but can be relatively expensive to manufacture and are only effective for some frequencies.
Meanwhile binary amplitude masks are cheap to manufacture (such as with consumer-grade 3D printers) and can operate across all frequencies (including x-ray), but offer fewer degrees of freedom. %

Historically, methods for designing optimal binary apertures have been fundamentally limited due to the lack of optimization techniques for discrete binary parameters. As a result, prior works~\cite{levin2007image, nayar2009coded, nayar2009what} walk over a restricted search space, leaving ample room for improvement. To solve this issue, we propose a novel implicit neural representation for binary amplitude masks. We use an MLP to predict the percent of photons blocked at each mask location $(u,v)$. The input in $\R^2$ is processed by a four-layer MLP with hidden feature size $128$ and SoftPlus~\cite{Nair:2010:SoftPlus} activation. The output to the network is passed through a sigmoid. %
We refer to this method as \textit{Neural Amplitude Mask (NAM)}.

\section{Experimental Details}

PSFs are simulated for a microscope imaging system with NA$=1.4$, index of refraction $n=1.518$, wavelength $\lambda=550$nm, magnification $M=111.11$, $4$f lens focal length $f=150$mm, pixel pitch of $49.58\mu$m, and resolution of $256\times 256$. Each phase and amplitude mask is optimized using $\mathcal{L}_{CRB}$ for 10,000 epochs. Because particle motion influences FI, we leverage Monte Carlo sampling while training to maximize information content for all motion directions. For each epoch, we compute the total CRB for $3$ random orthogonal motions across $11$ depth planes.
We use the Adam~\cite{KingBa15} optimizer with parameters $\beta_1=0.99$, $\beta_2=0.999$, and a learning rate of $10^{-3}$. Training and testing were conducted on NVIDIA RTX A5000 GPUs.

To validate our design's ability to track point sources, we train a Convolutional Neural Network (CNN) to map binned event frames to 3D locations. Events are accumulated over $16$ refresh cycles to produce an accumulated event frame. These $256\times 256$ single-channel images are processed by a CNN with 5 convolutional blocks and a linear output head. Each block is followed by batch normalization, ELU activation~\cite{Clevert:2016:ELU}, and max pooling. The output is a normalized length $3$ vector representing the position of the particle at a given time step. The CNN is trained on $3$ Brownian motion trajectories. Each trajectory is sampled at 16,000 time steps. A `coded' CMOS video frame is simulated by blurring a $300$nm emitter with the optical component's PSF for the location \newtext{and adding Gaussian noise (to simulate other noise sources such as thermal)}. Next, we generate a `coded-event-stream' from the high-speed video using standard event camera simulator methods by tracking the per-pixel reference signal~\cite{Hu:2021:v2e}. Finally, we bin every $16$ frames to produce a $1000$-frame `coded-event-video'. The particle location at the end of the $16$-frame bin is considered the ground truth position. We supplement this training with $2000$ random starting positions and corresponding motion vectors. Each motion is scaled to have magnitude drawn from $\net\pare{100\text{nm}, 20\text{nm}}$. For each position-motion pair, we generate a $16$ frame `coded' CMOS video to accumulate into a `coded-event-frame'. The CNN is trained for 100 epochs with the Adam optimizer.
\newtext{
We also manufacture a lab prototype to the demonstrate practical benefits of coded apertures for event cameras (see Section S1 in the supplementary materials for details).
}

\section{Results}
Because designed optics for event cameras is an emerging field, we compare our optimized phase and amplitude mask designs to components designed for traditional CMOS sensors: open aperture/Fresnel lens, Fisher phase mask~\cite{Shechtman:2014} and Levin \etal's amplitude mask~\cite{levin2007image} (\autoref{fig:other-optics}). 

\begin{figure}[t]
    \centering
    \includegraphics[width=\linewidth]{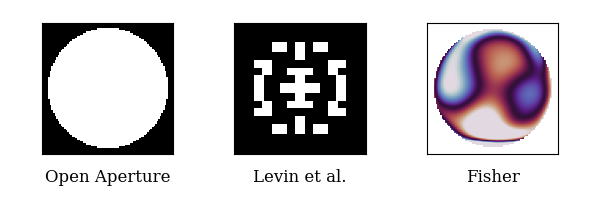}
    \caption{\textbf{Visualization of non-event camera-specific optical components.} Each component is placed in the same plane as a $150$mm focal length lens.}
    \label{fig:other-optics}
\end{figure}

\subsection{Cram\'er Rao Bound}
We simulate Brownian motion by sampling $1000$ unit direction vectors and independently scaling them by a magnitude drawn from $\net\pare{100\text{nm}, 20\text{nm}}$. The speed is relative to the event camera refresh rate, with a $1000$ accumulated-event-frame per second system, this motion simulates a range of biological processes such as molecular diffusion~\cite{Yang:2015:Diffusion}.
We then evaluate the average CRB over the $1000$ motions at $30$ depth planes spaced evenly on a $3\mu$m range around the focal plane. For all $6$ position parameters, we plot the CRB trend with respect to depth (\autoref{fig:crb-others}). Observe that each optical system performs worse as a point source moves away from the focal plane as the defocus change decreases. Although an open-aperture lens is slightly better around the focal plane, its bound increases at a higher rate than the other designs. We also report the average CRB over all parameters and depth slices to demonstrate our neural-based phase mask is best overall (\autoref{tab:crb-all}).

\begin{table}[t]
\begin{center}
\begin{tabular}{ccc}
                           & Component     & CRB (nm) $\downarrow$        \\ \hline
                           & Open Aperture &  80.8  \\ \hline
\multirow{2}{*}{Amplitude} & Levin \etal   &  263.3  \\
                           & NAM (Ours)      &  50.5  \\ \hline
\multirow{2}{*}{Phase}     & Fisher        &  36.3 \\
                           & NPM (Ours)      &  \textbf{33.1}     
\end{tabular}
\end{center}
\caption{ \textbf{Average CRB for each optical component} across a $3\mu$m depth range for all $6$ position parameters. Phase masks outperform amplitude masks due to higher light efficiency, and our neural-designed phase mask is best. }
\label{tab:crb-all}
\end{table}

\begin{figure}
    \centering
    \includegraphics[width=\linewidth]{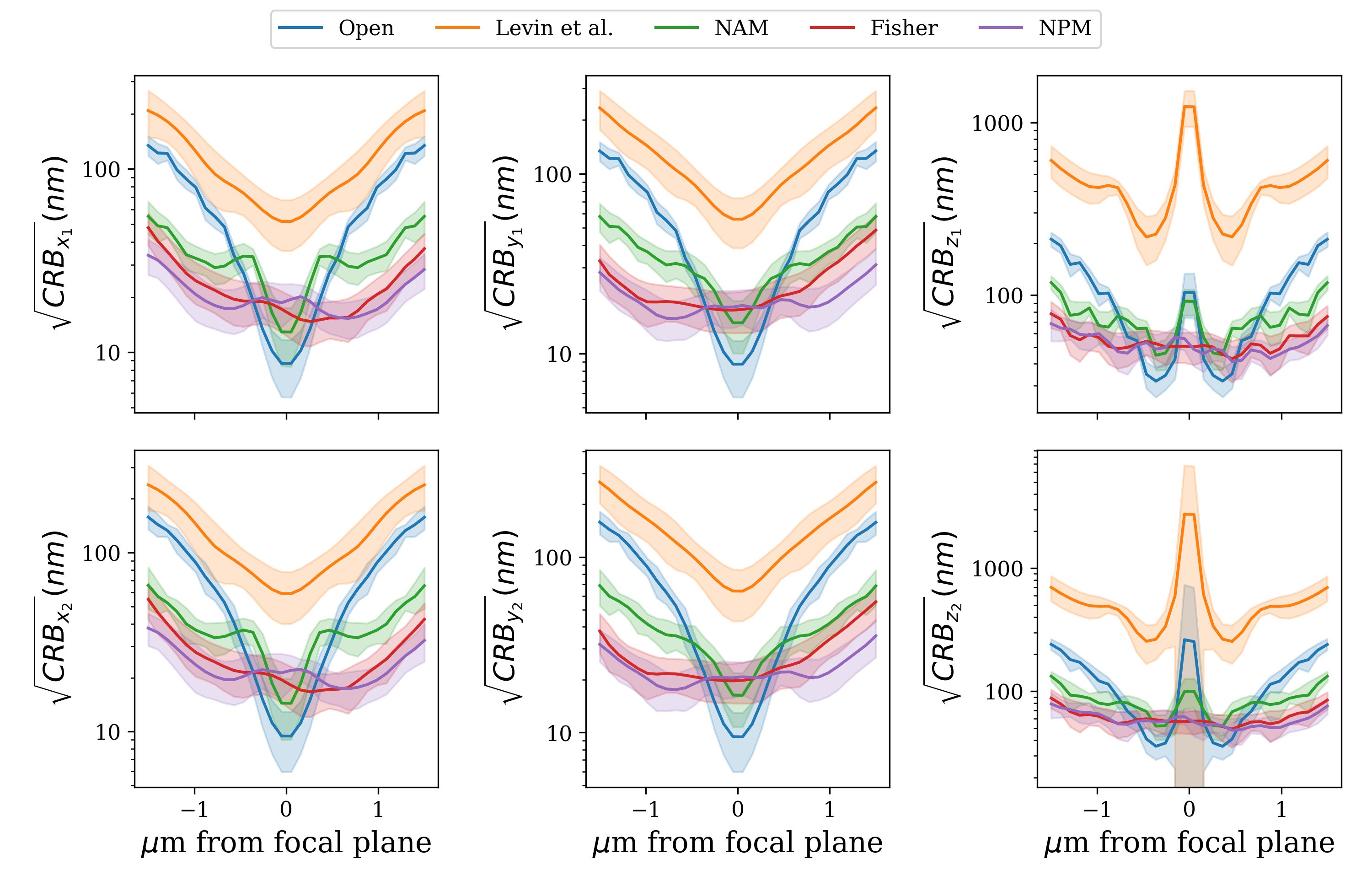}
    \caption{\textbf{3D localization CRB with respect to depth}. First row: particle's $x,y,z$ position at time $t-\tau$. Second row: particle's $x,y,z$ position at time $t$. Observe the bound increases as the source drifts from the focal plane.}
    \label{fig:crb-others}
\end{figure}

\subsection{3D Tracking}
We validate our theoretical results in simulation by tracking a 3D moving emitter across a $8\mu\text{m}\times 8\mu\text{m}\times 4\mu\text{m}$ volume. After training a CNN to decode 3D position from coded event frames, we evaluate our network tracking performance on $5$ sequences of Brownian motion, each consisting of $1000$ binned frames.
\autoref{tab:tracking} shows our event camera-specific optical designs minimize 3D tracking error more than conventional designs. Additionally, our method is substantially better at depth plane recovery. Qualitative results in \autoref{fig:tracking} demonstrate that 3D positions recovered using our designs more tightly fit ground-truth trajectories.

\begin{table}
\begin{center}
\begin{tabular}{cccc}
                           &               &RMSE (nm) $\downarrow$ & $L_1$ (nm) $\downarrow$ \\
                           & Component     & 3D   & $z$ \\ \hline
                           & Open Aperture &  617   &         936   \\ \hline
\multirow{2}{*}{Amplitude} & Levin \etal   &  764   &         1036   \\
                           & NAM (Ours)      &  66.0  &         49.2   \\ \hline
\multirow{2}{*}{Phase}     & Fisher        &  52.6  &         44.2   \\
                           & NPM (Ours)      &  \textbf{51.2} &  \textbf{39.2}                
\end{tabular}
\end{center}
\caption{ \textbf{Tracking accuracy comparison.} We present quantitative results on 3D trajectory recovery for known optical designs. Our event CRB loss function found the best-performing design. Although only slightly improved in overall 3D tracking, our design noticeably improves depth recovery. }
\label{tab:tracking}
\end{table}

\begin{figure}
    \centering
    \includegraphics[width=\linewidth]{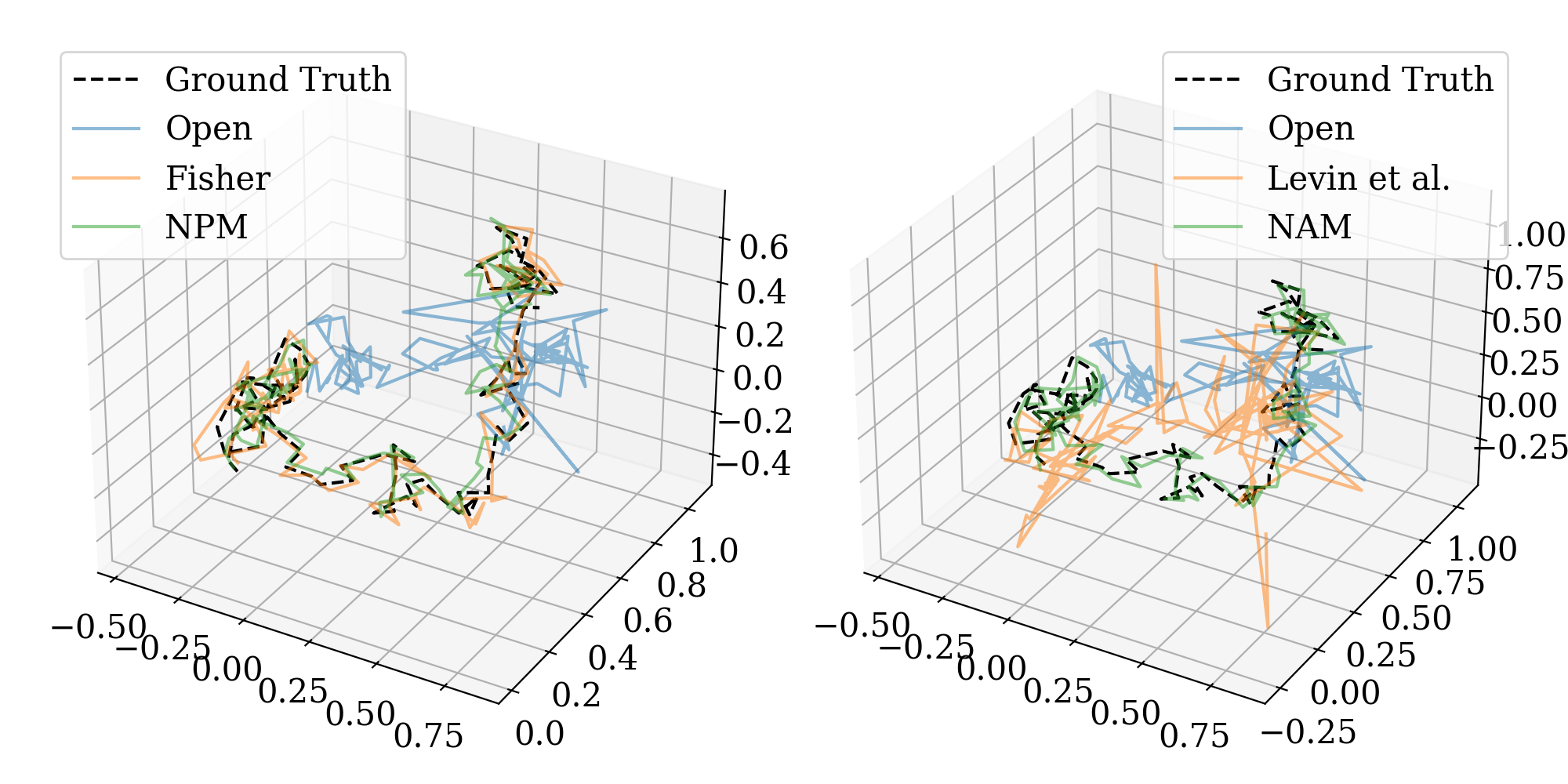}
    \caption{\textbf{Recovered 3D position over Brownian motion sequence with coded event frames.} Left: phase mask methods, right: amplitude mask methods. Observe trajectories reconstructed from phase mask-coded events more closely align with ground-truth positions. Units in microns. }
    \label{fig:tracking}
\end{figure}

\section{Ablation Studies}
\subsection{Optical Representations}

Additionally, we compare 3D tracking results using two different amplitude mask representations: pixel-wise and neural amplitude mask (\autoref{fig:learned-amplitude}) and three different phase mask representations: pixel-wise, Zernike basis, and neural phase mask (\autoref{fig:learned-phase}). As shown in \autoref{tab:crb-parameterization}, our implicit neural representation-based methods achieve a lower average error bound than alternative representations, despite being two times smaller than pixel-wise representations with respect to the number of parameters. As expected, phase mask results generally outperform the amplitude mask results (\autoref{fig:crb-parameterization}). However, our novel neural binary aperture makes optimizing amplitude masks more tractable. We observe that pixel-wise representations not only yield difficult-to-manufacture apertures but also suboptimal performance.
In terms of 3D tracking, the implicit neural representations produce a smaller error on average (\autoref{tab:tracking-abla}) and more accurately match sampled 3D trajectories (\autoref{fig:tracking-abla}).

\begin{figure}
    \centering
    \includegraphics[width=\linewidth]{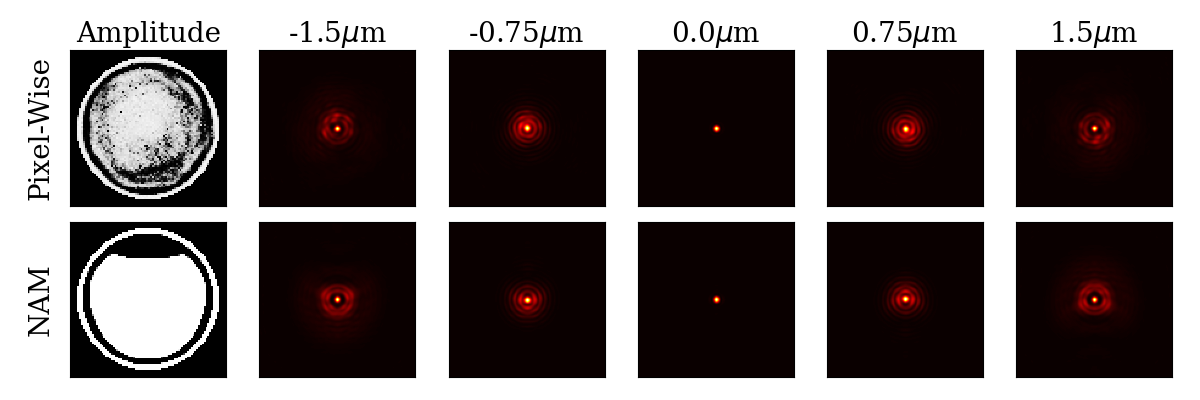}
    \caption{\textbf{Designed amplitude masks and corresponding PSFs.} Top: pixel-wise representation. Bottom: implicit neural representation.}
    \label{fig:learned-amplitude}
\end{figure}

\begin{figure}
    \centering
    \includegraphics[width=\linewidth]{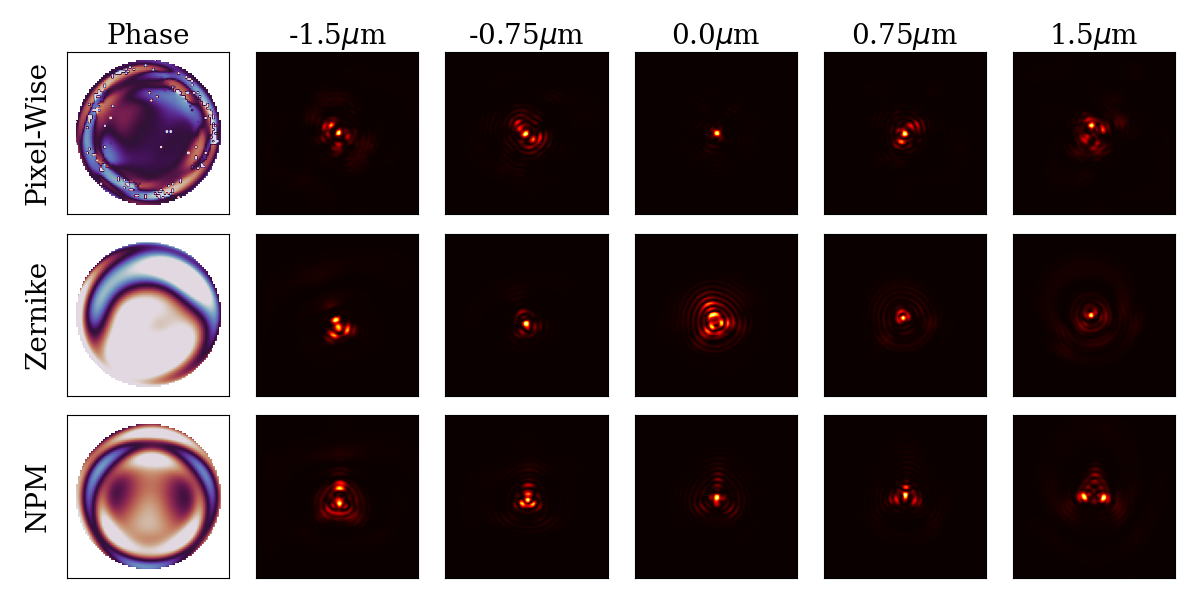}
    \caption{\textbf{Designed phase masks and corresponding PSFs.} Top: pixel-wise representation. Middle: first 55 Zernike coefficients representation. Bottom: implicit neural representation.}
    \label{fig:learned-phase}
\end{figure}

\begin{table}
\begin{center}
\begin{tabular}{ccc}
                           & Representation     & CRB (nm) $\downarrow$ \\ \hline
\multirow{2}{*}{Amplitude} & Pixel-Wise    &  65.5       \\
                           & NAM           &  50.5       \\ \hline
\multirow{3}{*}{Phase}     & Pixel-Wise    &  34.2       \\
                           & Zernike       &  34.8       \\
                           & NPM           &  \textbf{33.1}     
\end{tabular}
\end{center}
\caption{ \textbf{Average CRB of different optimized representations across a $3\mu$m depth range}. Notice the neural representations outperform their pixel-wise counterparts. }
\label{tab:crb-parameterization}
\end{table}

\begin{figure}
    \centering
    \includegraphics[width=\linewidth]{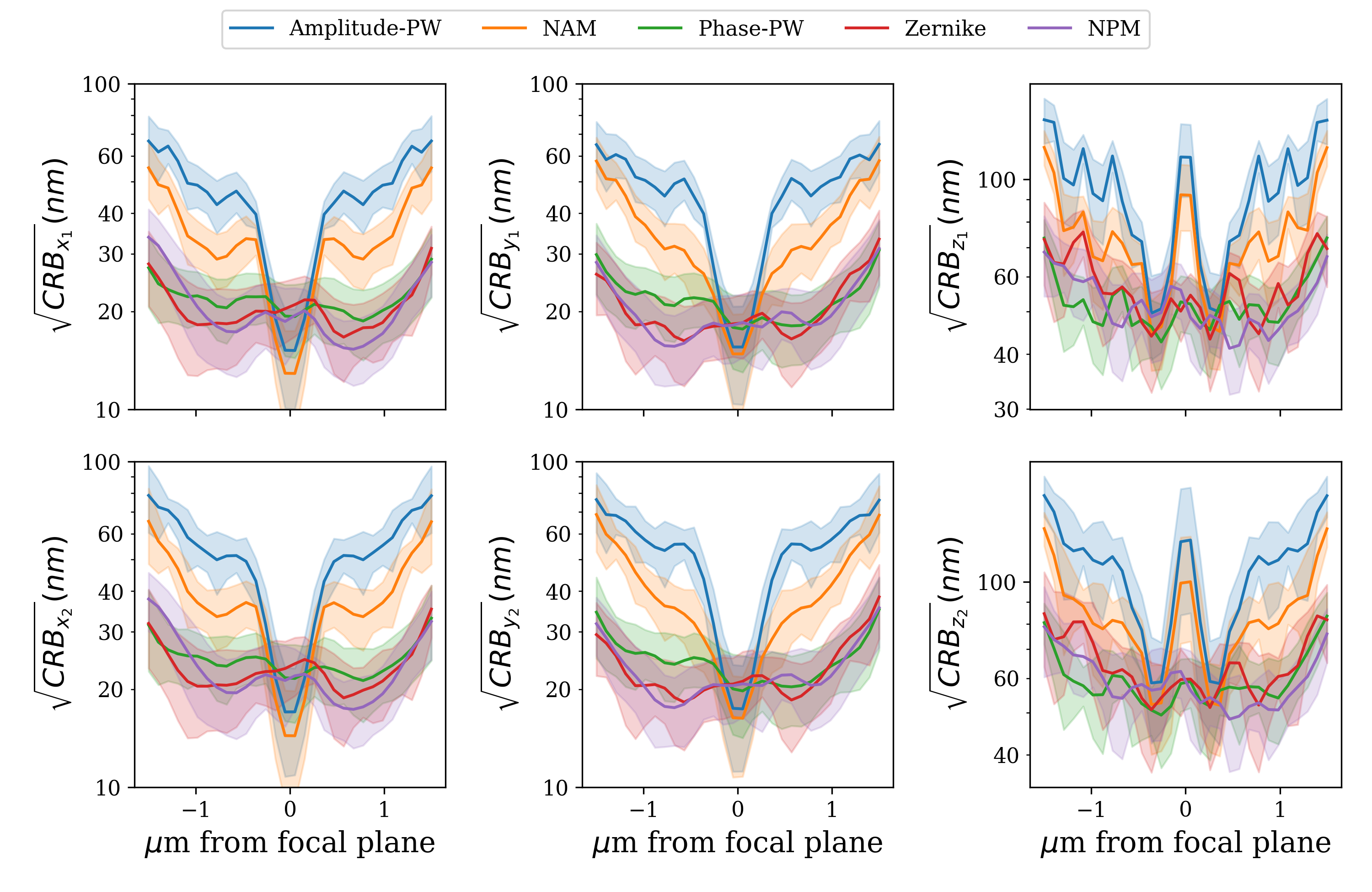}
    \caption{\textbf{Effect of optical parameterization on 3D localization CRB}. First row: particle's $x,y,z$ position at time $t-\tau$. Second row: particle's $x,y,z$ position at time $t$. Our implicit neural representations are particularly advantageous for amplitude masks.}\label{fig:crb-parameterization}
\end{figure}

\begin{table}
\begin{center}
\begin{tabular}{cccc}
                           &               &RMSE (nm) $\downarrow$ & $L_1$ (nm) $\downarrow$ \\
                           & Component   & 3D     & $z$ \\ \hline
\multirow{2}{*}{Amplitude} & Pixel-Wise  &  120   &         103   \\
                           & NAM         &  66.0  &         49.2   \\ \hline
\multirow{2}{*}{Phase}     & Pixel-Wise  &  56.5  &         45.9   \\
                           & Zernike     &  51.3  &         50.2   \\
                           & Our NPM     &  \textbf{51.2} &  \textbf{39.2}                
\end{tabular}
\end{center}
\caption{ \textbf{Effect of optimized mask parameterization on tracking accuracy.} Average distance between ground-truth Brownian motion and the recovered 3D position is minimized with our neural-based designs.}
\label{tab:tracking-abla}
\end{table}

\begin{figure}
    \centering
    \includegraphics[width=\linewidth]{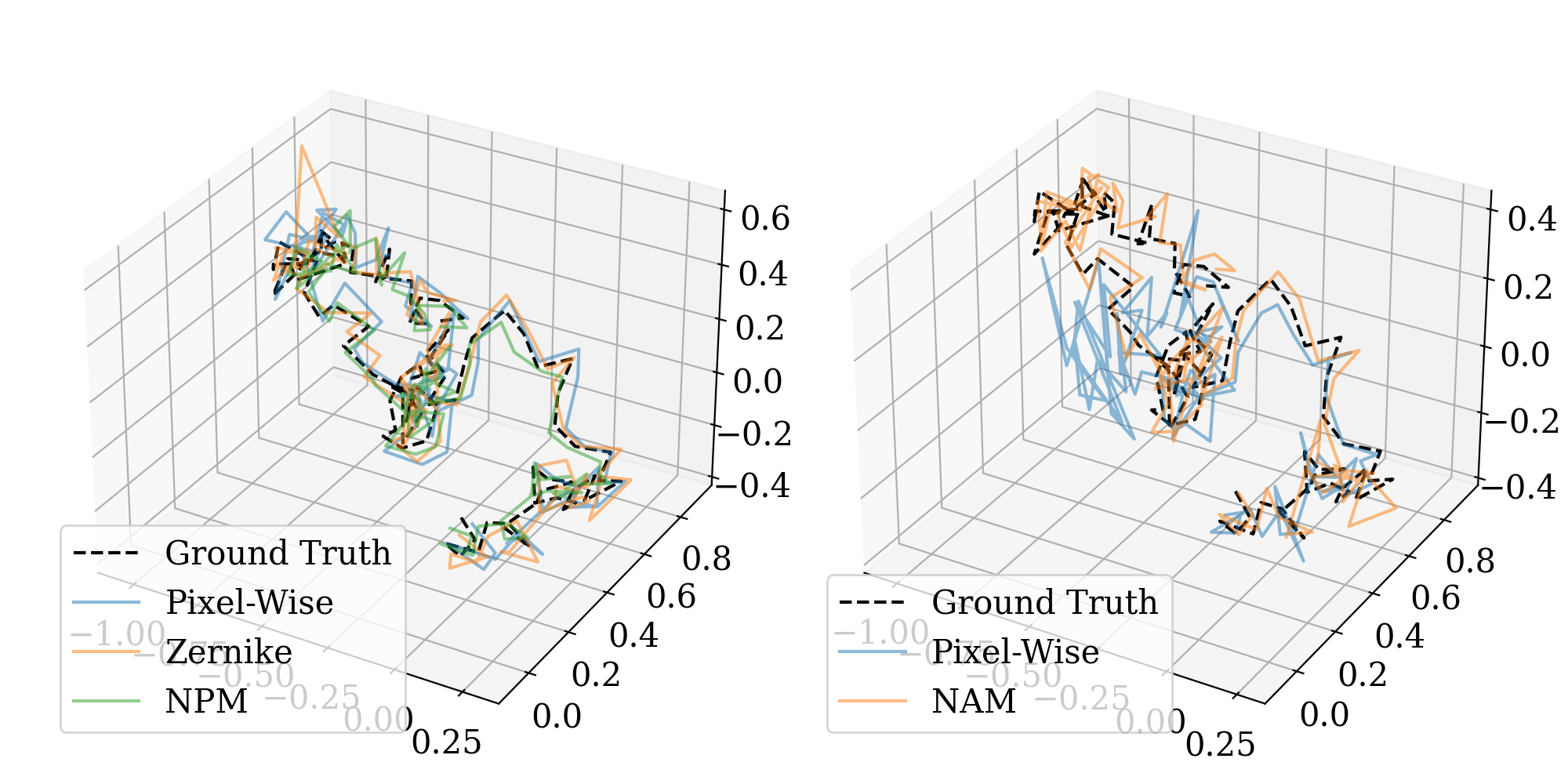}
    \caption{\textbf{Effect of optical representation on 3D trajectory recovery.} %
    Left: phase mask methods, right: amplitude mask methods. Observe that neural representations produce tighter reconstructions. Units in microns. }
    \label{fig:tracking-abla}
\end{figure}

\subsection{Tracking Limits}

In this section, we explore the limits of 3D tracking with variable external factors. For each experiment, we compute the average CRB over $30$ depth slices and $6$ parameters for $3$ orthogonal unit directions ($x$, $y$, and $z$). First, as the number of available photons increases, the lower bound on 3D position estimation monotonically decreases~(\autoref{fig:misc-photons}). More available photons equate to a higher signal-to-noise ratio. Additionally, this result helps explain why phase masks outperform amplitude masks.
Second, we show extremely slow-moving particles (less than nanometers per refresh rate) experience a significantly higher CRB~(\autoref{fig:misc-speed}). Minimal movement indicates smaller intensity changes and thus an event camera would trigger fewer events. On the other side, as a particle moves faster, the number of events will decrease as there is a non-zero delay between when an event camera can trigger sequential events. Our learned phase mask is more robust to speed changes than an open aperture and our learned amplitude mask.
Third, when the percentage of photons due to background noise increases, the bound on error also increases (\autoref{fig:misc2}). We design our masks with $1\%$ of captured photons attributable to the background, but the learned designs are more resistant to degraded conditions than an open aperture.

\newtext{
We also explore the effect of modifying the accumulation period in Section S2 and how the optimal design changes with respect to speed in Section S3 of the supplement.
}

\begin{figure}
    \centering
    \includegraphics[width=\linewidth]{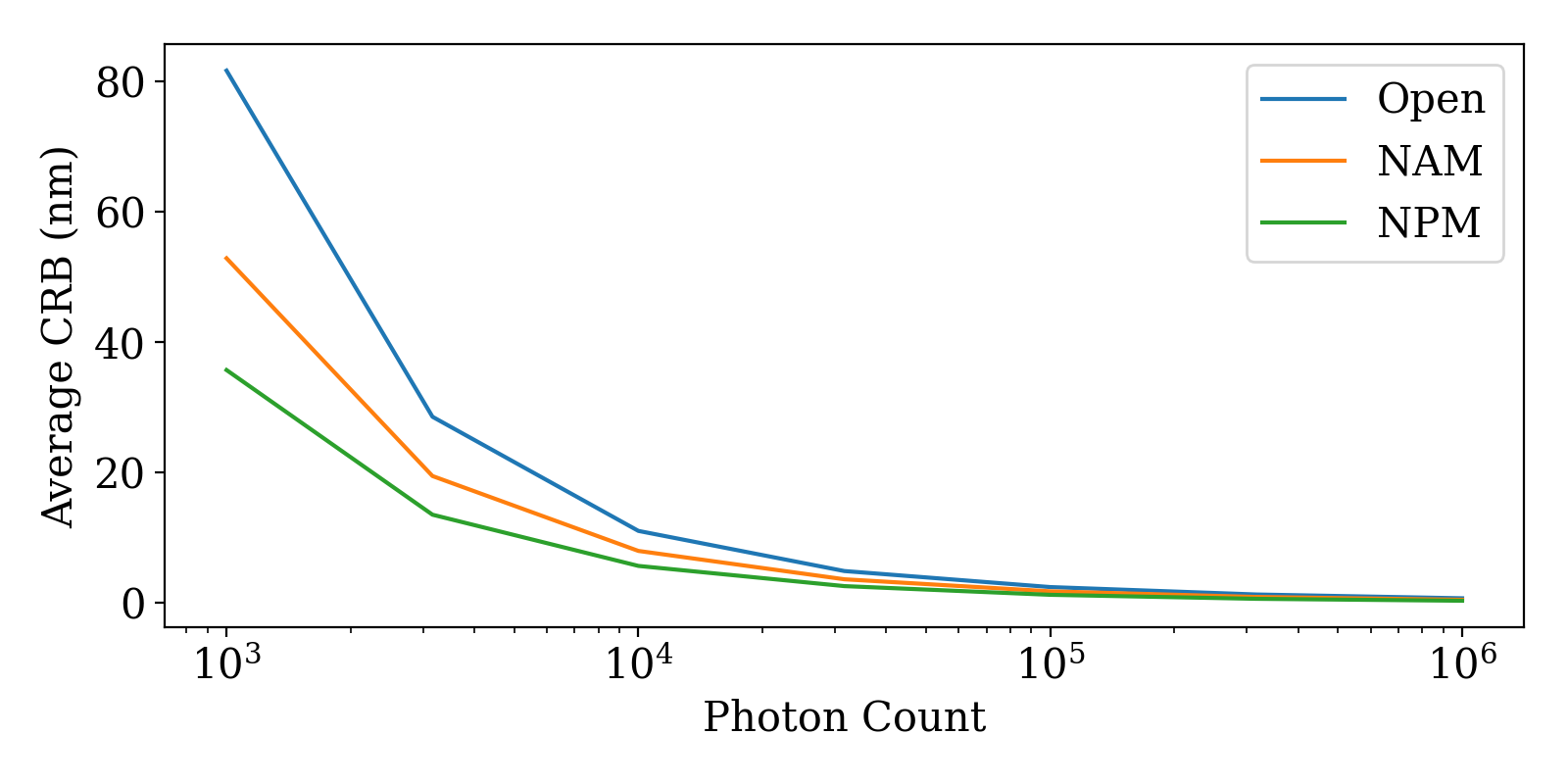}
    \caption{ \textbf{Flux effect on CRB.} With more available photons, the signal-to-noise ratio increases, so the 3D information content is more reliable, and the bound on 3D tracking error decreases. }
    \label{fig:misc-photons}
\end{figure}

\begin{figure}
    \centering
    \includegraphics[width=\linewidth]{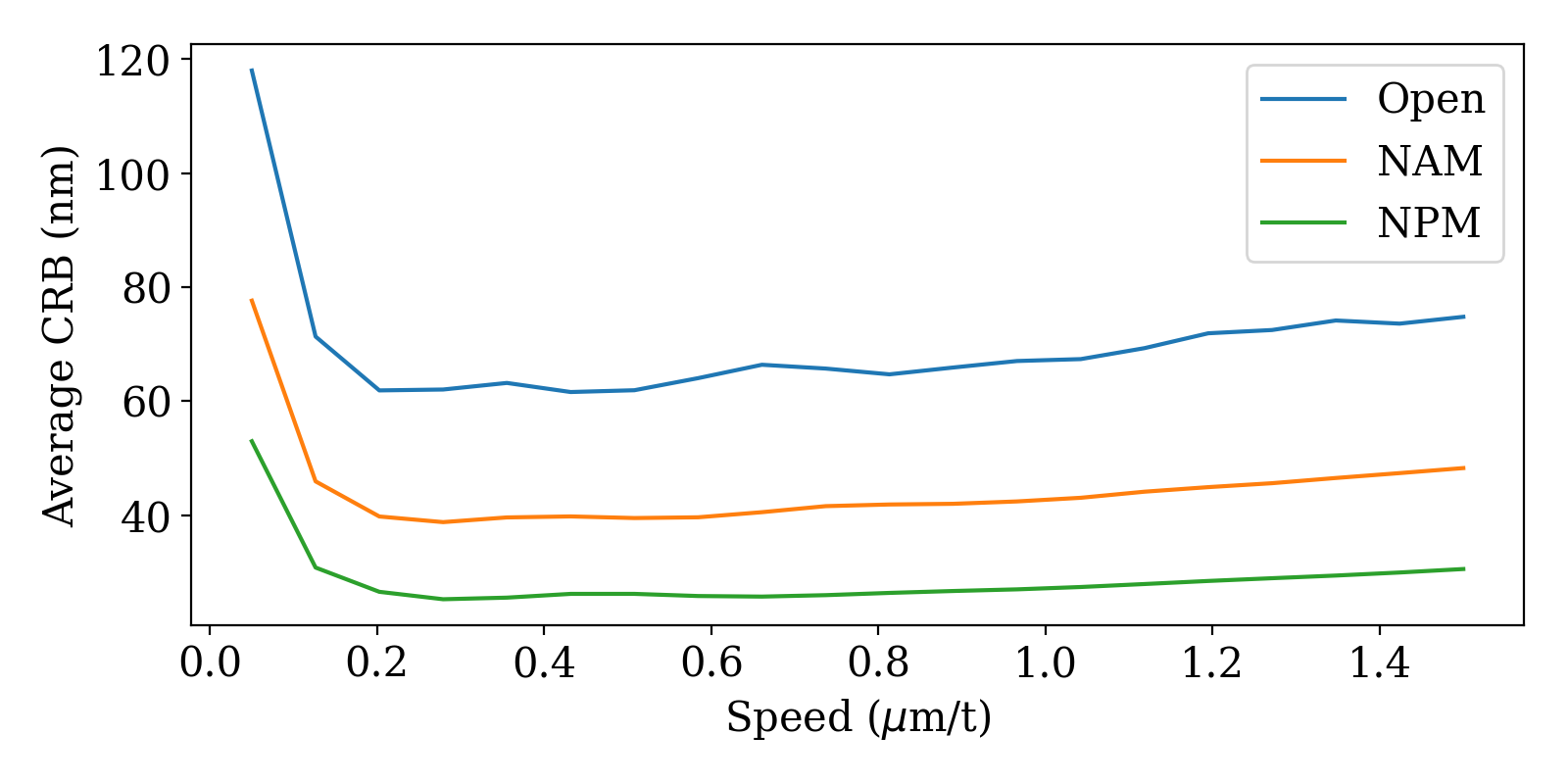}
    \caption{ \textbf{Speed effect on CRB.} Too-slow moving particles trigger fewer events yielding a worse CRB. Similarly, as a particle moves faster the delay between triggers leads to fewer events. }
    \label{fig:misc-speed}
\end{figure}

\begin{figure}
    \centering
    \includegraphics[width=\linewidth]{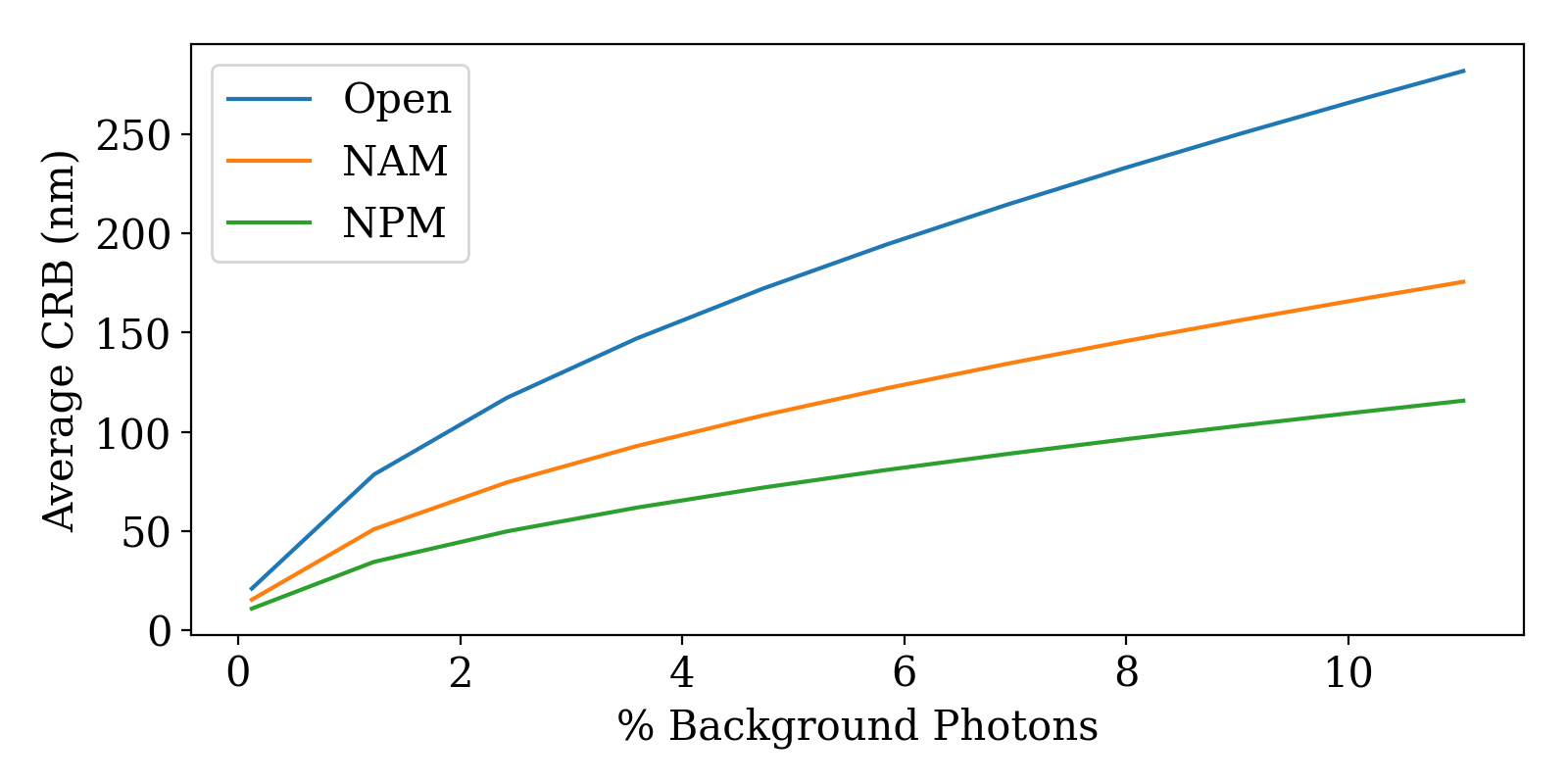}
    \caption{ \textbf{Background photon effect on CRB.} As the percentage of photons hitting the sensor due to background noise increases, CRB also increases. The impact is minimal in our method. }
    \label{fig:misc2}
    \vspace{-10pt}
\end{figure}

\section{Limitations}

While we were successful in designing optics to improve performance on 3D tracking with event cameras, our method carries some limitations. First, although our binned event frames can be obtained at kHz refresh rates, they do not take full advantage of the asynchronous nature of event cameras. Second, our bounds are for an idealized event camera model with no read-noise. It would be impossible to outperform these bounds, but there might exist a tighter bound that accounts for these hardware imperfections. Lastly, we only consider single-emitter images. With multiple point sources, the resolving accuracy between single points may be more limited.

\section{Conclusion}

This work introduces PSF-engineering to neuromorphic event-based sensors. We first derive information theoretical limits on 3D point localization and tracking. We demonstrate that existing amplitude and phase mask designs are suboptimal for tracking moving emitters and design new optical elements for this task. Additionally, to overcome the non-convexity of this optimization problem, we introduce a novel implicit neural representation for optical components. Finally, we validate the effectiveness of our designs in simulation and compare against state-of-the-art mask designs. Our work unlocks not only highly performant optics for event cameras but also the ability to design highly expressive elements for other sensors. 

\section*{Acknowledgements}
\vspace{-5pt}
\newtext{
This work was supported in part by the Joint Directed Energy Transition Office, AFOSR Young Investigator Program award no. FA9550-22-1-0208, ONR award no. N00014-23-1-2752 and N00014-17-1-2622, Dolby Labs, SAAB, Inc, and National Science Foundation grants BCS 1824198 and CNS 1544787. The support of the Maryland Robotics Center under a postdoctoral fellowship to C.S., is also gratefully acknowledged.
}

{
    \small
    \bibliographystyle{ieeenat_fullname}
    \bibliography{main}
}

\clearpage
\setcounter{page}{1}
\setcounter{section}{0}
\setcounter{figure}{0}
\renewcommand\thesection{S\arabic{section}}
\maketitlesupplementary

\section{Hardware Prototype} \label{supp:hardware}

\newtext{
We performed a real-world experiment for tracking a point light source at meter scale using a binary amplitude mask and a Prophessee EVK3 event camera. Specifically, we fabricated the NAM mask at 20mm diameter scale on a Creality Ender 3 S1 Pro using 1.75mm PLA filament (see \autoref{fig:realworld}). Then, we captured an event dataset by moving a point source at discrete depth planes ranging between $75$cm and $125$cm with and without our coded aperture. For all measurements, the camera was focused at $100$cm. We binned events in $1$ms intervals to achieve an effective frame rate of $1000$ FPS and trained a CNN to estimate the event frame's depth. Results in \autoref{fig:realworld-tracking} demonstrate improved tracking performance compared to an open aperture, particularly at depths where the point source is defocused.
}

\begin{figure}[t]
    \centering
    \includegraphics[width=0.3\linewidth]{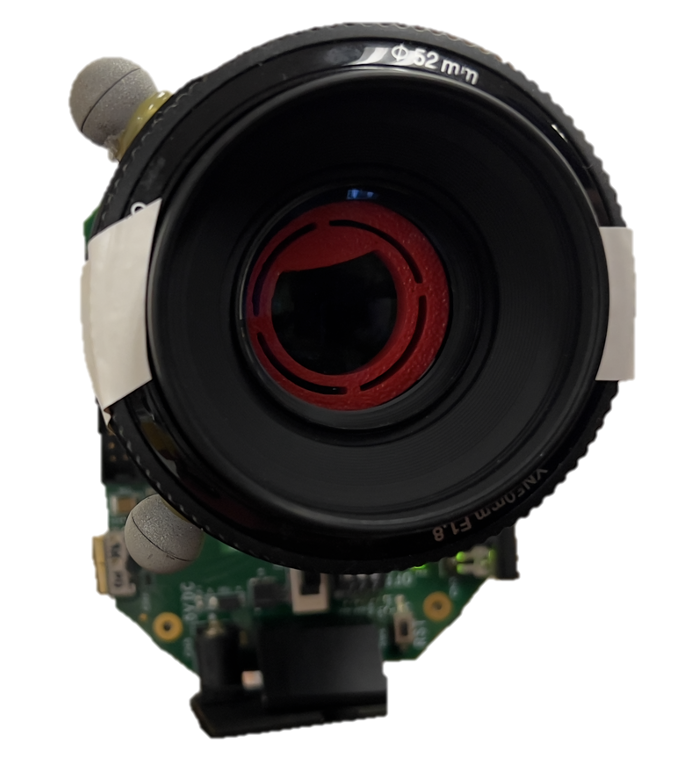}
     \includegraphics[width=0.9\linewidth]{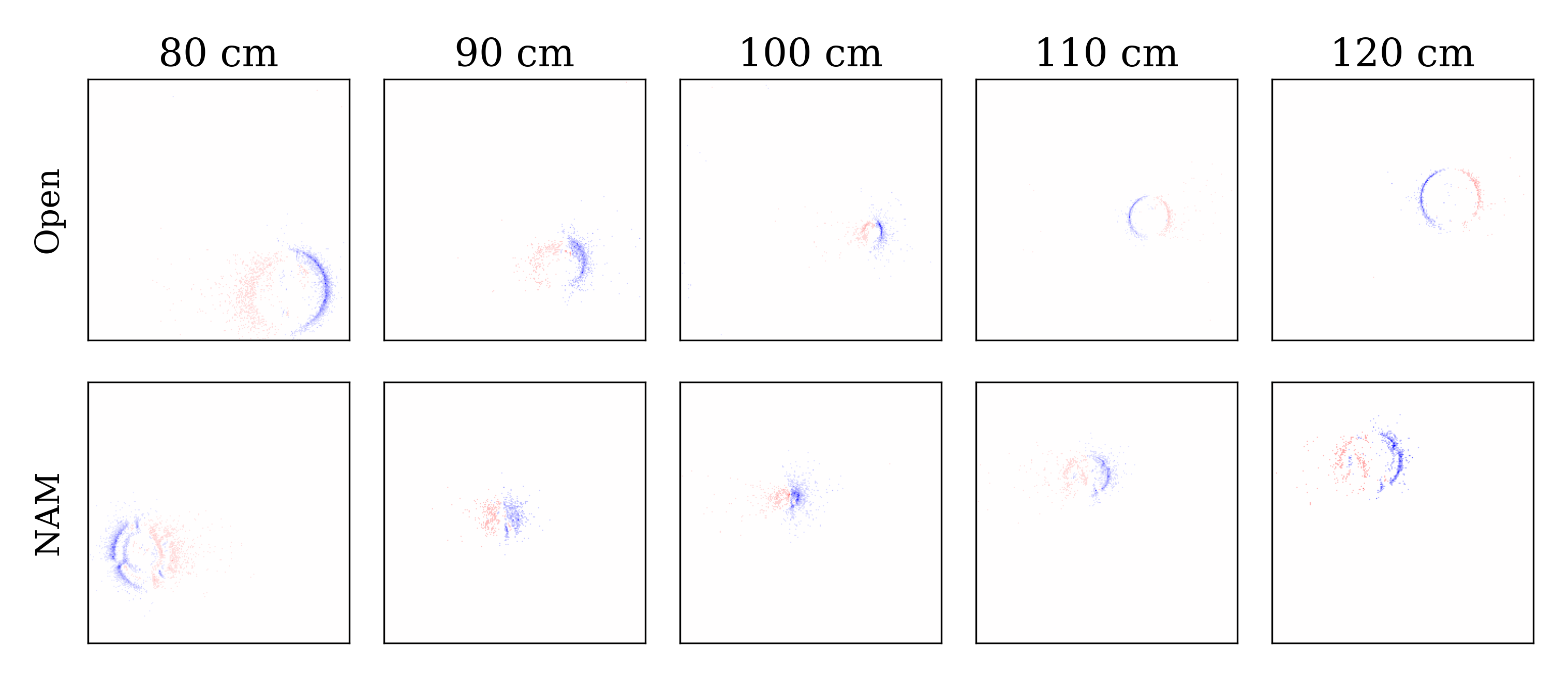}
    \caption{ \textbf{Prototype.} Top: The fabricated mask is placed at the aperture plane of an event camera with a $50$mm focal length lens. Bottom: Sample captured event frames for a point source.
    }
    \label{fig:realworld}
\end{figure}

\begin{figure}
    \centering
    \includegraphics[width=\linewidth]{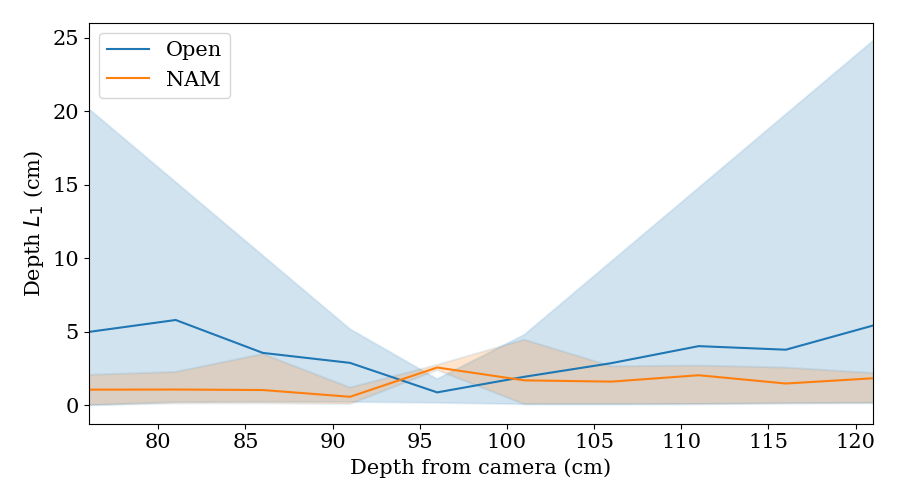}
    \caption{ \textbf{Real-world 3D tracking.} Comparison between NAM and Open apertures for depth estimation at 1000FPS. Error bars show the 90\% interquartile range.}
    \label{fig:realworld-tracking}
\end{figure}

\section{Accumulation Time}\label{supp:acc-time}
\newtext{
Cutting-edge event cameras offer $10$kHz fresh rates; even with $16$-frame accumulation, the camera effectively operates at $625$FPS --- much faster than conventional CMOS sensors.
We also retrained our CNN-based tracking algorithm on `pure' event frames with no accumulation. Overall performance degraded: NPM by $+45$\% RMSE and NAM by $+54$\% RMSE. Alternative architectures such as Spiking Neural Networks designed for sparse binary measurements may be better suited for processing `pure' events.
}

\section{The Effects of Particle Speed} \label{supp:part-speed}

We have shown CRB depends on particle speed; a natural question is does the optimal design change with respect to speed.
We optimize our neural phase mask using the CRB objective function with fixed particle speeds---$\{50, 100, 500, 1000\}$nm per time step.
Our learned designs are shown in \autoref{fig:evt-speed-learned}. When a particle moves quickly relative to the binned interval, the optimal design resembles the Fisher phase pattern found for traditional CMOS sensors. 

One can explain this collapse to the original Fisher mask design as follows. As a particle moves faster, the captured binned event frame looks more similar to the composition of a negative PSF at the start location and a positive PSF at the end location (\autoref{fig:evt-speed-effect}). This suggests that single-point event tracking mirrors two-point CMOS tracking.

\begin{figure}
    \centering
    \includegraphics[width=\linewidth]{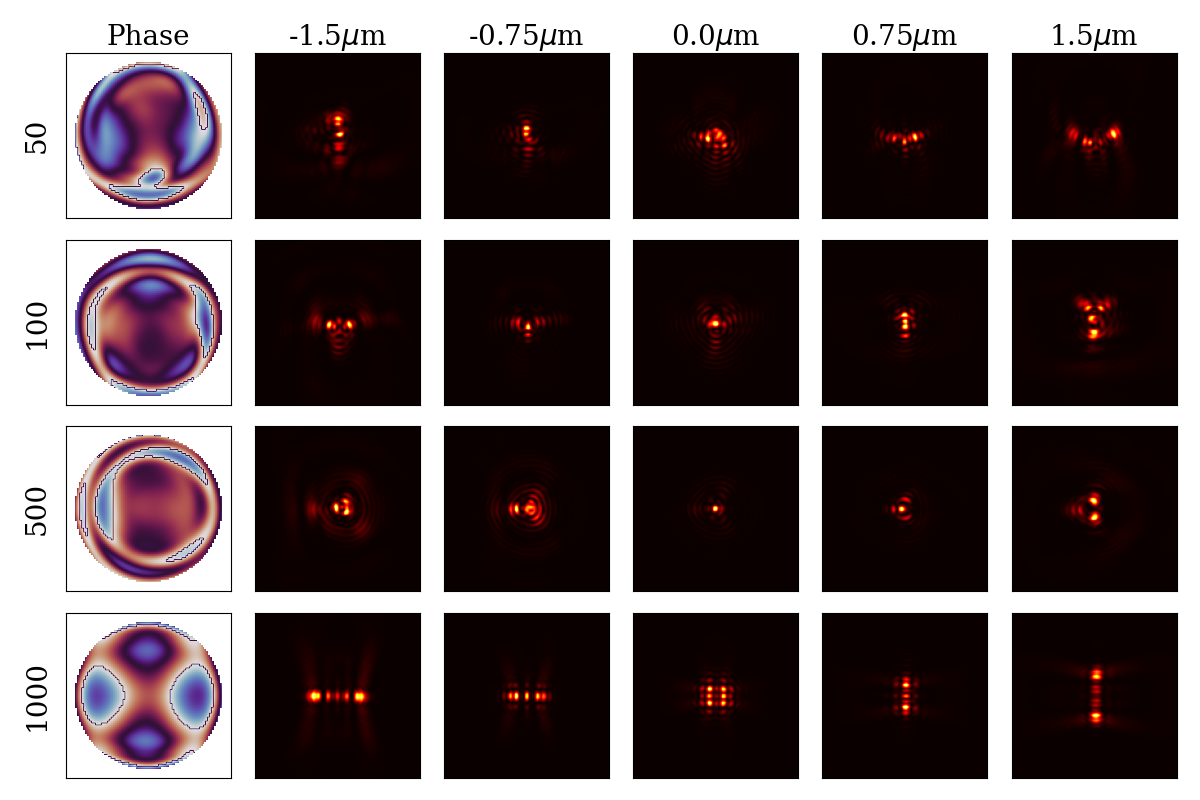}
    \caption{ \textbf{ Designed Phase Masks and corresponding PSFs for specific speeds. } Each row visualizes the neural phase mask designed for tracking particles moving at $N$ nanometers per time interval. Observe that the optimal design for `fast' moving particles is the Fisher design. }
    \label{fig:evt-speed-learned}
\end{figure}

\begin{figure*}
    \centering
    \includegraphics[width=\linewidth]{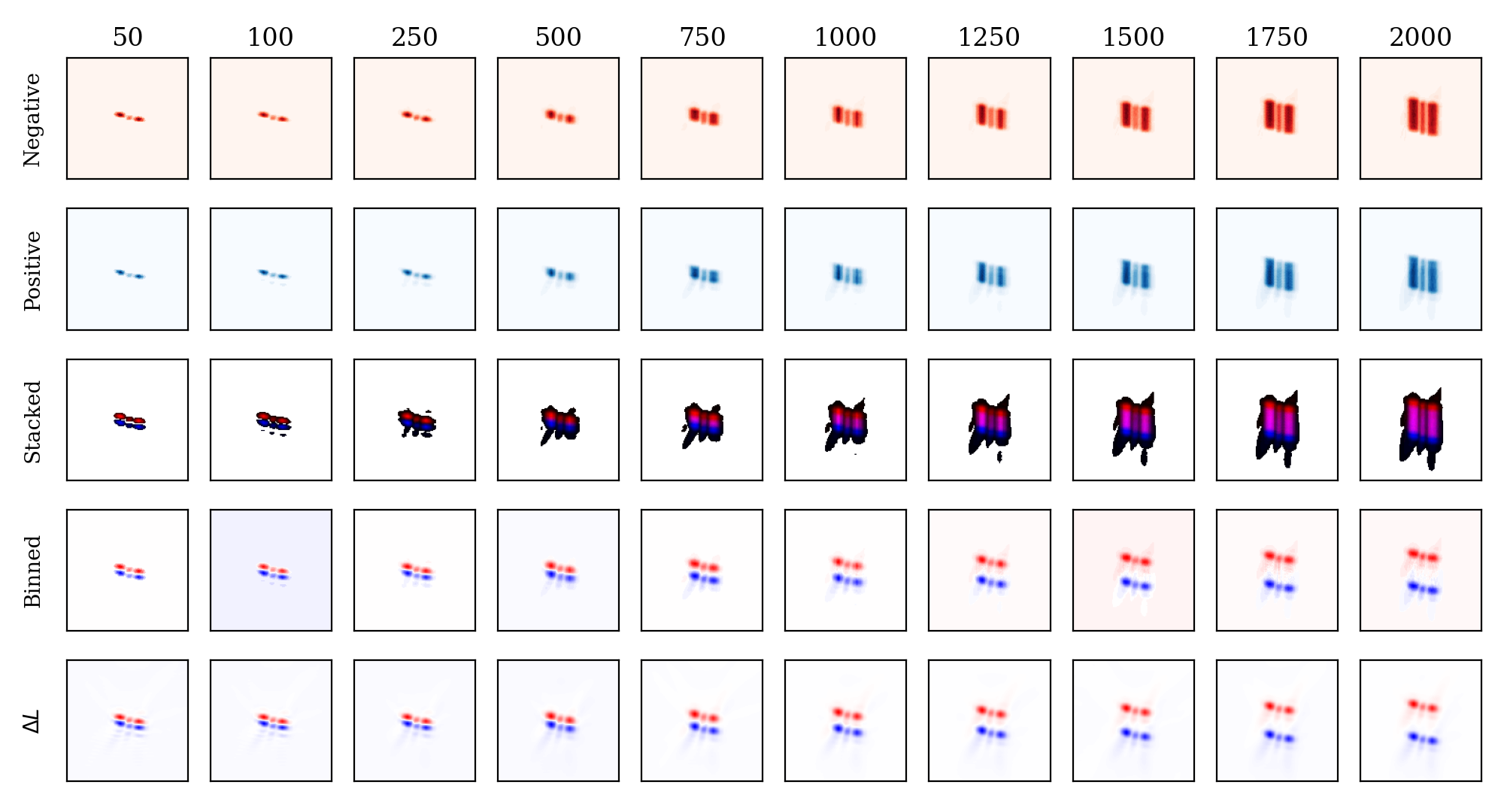}
    \caption{ \textbf{ Event camera measurements of a moving particle with the Fisher mask. } Motion is simulated over a fixed time interval with $100$ event samples. Observe a `fast' moving particle produces an event frame with two copies of a regular PSF: a negative copy at the start location, and a positive copy at the end location.
    \textit{Row 1}: negative event count over the time interval. \textit{Row 2}: positive event count over the time interval.
    \textit{Row 3}: the red channel visualizes negative events and the blue channel visualizes positive events. The pink regions represent where the events cancel in a binned measurement.
    \textit{Row 4}: binned event frame $pos - neg$.
    \textit{Row 5}: log-intensity difference $\Delta L$. }
    \label{fig:evt-speed-effect}
\end{figure*}

\section{Log-Intensity Difference Approximation}

In this section, we prove the log-intensity difference approximation we consider when deriving the Cram\'er Rao Bound is proportional to binned event frames.

Assume an idealized event camera model, where an event is triggered as soon as the log-intensity change between the reference and the current intensity equals some threshold, $\mathcal{T}$. Consider producing a binned event frame for a time interval $[t_\text{start}, t_\text{end}]$. For a single pixel, let the sequence of events over this interval occur at times $t_1, t_2, \ldots t_n$ and have polarities $p_1, p_2, \ldots, p_n\in\{-1, 1\}$. Let $f(t)$ be the log-intensity at time $t$ for the same pixel and be continuous over the interval.  
\begin{lemma}\label{thm:log-diff}
The log-intensity difference, $f(t_\text{end}) - f(t_\text{start})$, is proportional to the binned event pixel value, $\sum_{i=1}^n p_i$, with error $|\epsilon|<1$.
\begin{align}
    f(t_\text{end}) - f(t_\text{start}) \propto \epsilon + \sum_{i=1}^n p_i
\end{align}
\end{lemma}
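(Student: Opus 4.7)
The plan is to exploit the defining property of the idealized event camera --- that an event fires exactly when the log-intensity has deviated from the current reference by the threshold $\mathcal{T}$, at which point the reference is reset to the present value --- in order to derive a telescoping identity for $f$ evaluated at the sequence of event times. Taking the reference at the start of the bin to equal $f(t_{\text{start}})$, the trigger condition applied at $t_1$ reads $f(t_1) - f(t_{\text{start}}) = p_1\mathcal{T}$; by the reset rule, the reference for the next event becomes $f(t_1)$, so $f(t_{i+1}) - f(t_i) = p_{i+1}\mathcal{T}$ for every $i = 1,\dots,n-1$.

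Summing these differences telescopes to $f(t_n) - f(t_{\text{start}}) = \mathcal{T}\sum_{i=1}^n p_i$. For the tail interval $[t_n, t_{\text{end}}]$, the fact that no further event fires means the trigger threshold has not been crossed from the reference $f(t_n)$, so by continuity $|f(t_{\text{end}}) - f(t_n)| < \mathcal{T}$; writing the residual as $\epsilon\mathcal{T}$ with $|\epsilon|<1$ and adding it to the telescoped sum yields $f(t_{\text{end}}) - f(t_{\text{start}}) = \mathcal{T}\bigl(\epsilon + \sum_{i=1}^n p_i\bigr)$, which is exactly the claimed proportionality (with constant of proportionality $\mathcal{T}$). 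The empty case $n=0$ is covered directly by the same tail inequality applied on $[t_{\text{start}}, t_{\text{end}}]$.

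The one non-routine point --- and the main obstacle I would flag --- is the boundary condition on the reference at $t_{\text{start}}$. If the bin boundary does not coincide with an event, the reference inherited from the previous bin could differ from $f(t_{\text{start}})$ by up to $\mathcal{T}$, which in the most pessimistic accounting would inflate $|\epsilon|$ to $2$ rather than $1$. To obtain the stated bound I would either adopt the standard idealization that the reference is refreshed at the start of each accumulation window (consistent with the \textit{v2e}-style simulator described earlier in the paper) or, equivalently, redefine $t_{\text{start}}$ to be the preceding event time so that the initial alignment $r(t_{\text{start}}) = f(t_{\text{start}})$ is automatic. Either assumption should be stated explicitly at the start of the proof; once it is in place, the telescoping argument above finishes the proof in a few lines.
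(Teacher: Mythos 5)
Your proposal is correct and follows essentially the same route as the paper: the trigger condition gives $f(t_i)-f(t_{i-1})=p_i\mathcal{T}$, the sum telescopes to $f(t_n)-f(t_{\text{start}})$, and the absence of a further event bounds the tail residual by $\mathcal{T}$. The boundary issue you flag is real but is handled in the paper by the same implicit assumption you propose --- it simply asserts $t_0=t_{\text{start}}$, i.e.\ that the reference is aligned with the log-intensity at the start of the accumulation window --- so your explicit statement of that hypothesis is, if anything, a small improvement in rigor.
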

\begin{proof}
By assumption, the magnitude of the change corresponding to each event is $\mathcal{T}$. Notice that $\mathcal{T}p_i$ is the log-intensity difference between the previous event time (the reference) and the current event time.
\begin{equation}
    \sum_{i=1}^n p_i = \frac{1}{\mathcal{T}} \sum_{i=1}^n f(t_i) - f(t_{i-1}) 
\end{equation}
The right-hand side is a telescoping sum,
\begin{equation}
    \sum_{i=1}^n f(t_i) - f(t_{i-1}) = f(t_n) - f(t_0).
\end{equation}
$t_0=t_\text{start}$ because the first event must occur $t_1 - t_0$ after the start of the interval. Then, the binned event frame is 
\begin{equation}
    \sum_{i=1}^n p_i = \frac{1}{\mathcal{T}}\pare{f(t_n) - f(t_\text{start})}.
\end{equation}
Finally, $|f(t_n) - f(t_\text{end})| = |\delta| < \mathcal{T}$ because if the quantity exceeded the threshold, an additional event would be triggered. Substitute $t_\text{end}$ for $t_n$.
\begin{align}
    \sum_{i=1}^n p_i &= \frac{1}{\mathcal{T}}\pare{f(t_\text{end}) - f(t_\text{start}) + \delta} \\
    &= \frac{1}{\mathcal{T}}\pare{ f(t_\text{end}) - f(t_\text{start}) } + \epsilon 
\end{align}
Thus, a binned event frame can be approximated as log-intensity difference divided by $\mathcal{T}$ with error $|\epsilon|<1$.
\end{proof}
As an event camera becomes more sensitive to change ($\mathcal{T}$ decreases), the approximation's percent error decreases because the magnitude of the binned event frame increases but the total absolute error is fixed at most $1$.

\end{document}